\theoremstyle{plain}
\theoremstyle{definition}
\theoremstyle{remark}
\icmltitlerunning{Controlled Generation with Equivariant Variational Flow Matching}
\begin{document}

\twocolumn[
\icmltitle{Controlled Generation with Equivariant Variational Flow Matching}



\icmlsetsymbol{equal}{*}

\begin{icmlauthorlist}
\icmlauthor{Floor Eijkelboom}{bosch}
\icmlauthor{Heiko Zimmermann}{amlab} 
\icmlauthor{Sharvaree Vadgama}{amlab} \\
\icmlauthor{Erik J Bekkers}{amlab}
\icmlauthor{Max Welling}{bosch}
\icmlauthor{Christian A. Naesseth$^*$}{bosch}
\icmlauthor{Jan-Willem van de Meent$^*$}{bosch}
\end{icmlauthorlist}

\icmlaffiliation{bosch}{Bosch-Delta Lab}
\icmlaffiliation{amlab}{AMLab}

\icmlcorrespondingauthor{Floor Eijkelboom}{f.eijkelboom@uva.nl}

\icmlkeywords{Machine Learning, ICML}

\vskip 0.3in
]



\printAffiliationsAndNotice{\icmlEqualContribution} 

\begin{abstract}
We derive a controlled generation objective within the framework of Variational Flow Matching (VFM),
which casts flow matching as a variational inference problem.
We demonstrate that controlled generation can be implemented two ways: (1) by way of end-to-end training of conditional generative models, or (2) as a Bayesian inference problem, enabling post hoc control of unconditional models without retraining. 
Furthermore, we establish the conditions required for equivariant generation and provide an equivariant formulation of VFM tailored for molecular generation, ensuring invariance to rotations, translations, and permutations. 
We evaluate our approach on both uncontrolled and controlled molecular generation, achieving state-of-the-art performance on uncontrolled generation and outperforming state-of-the-art models in controlled generation, both with end-to-end training and in the Bayesian inference setting. 
This work strengthens the connection between flow-based generative modeling and Bayesian inference, offering a scalable and principled framework for constraint-driven and symmetry-aware generation.
\end{abstract}

\section{Introduction}

Generative modeling has seen remarkable advances in recent years, particularly in image generation \cite{ramesh2022hierarchical, rombach2022high}, where diffusion-based approaches based on score matching \cite{vincent2011connection} have proven highly effective \cite{ho2020denoising, song2020score}. However, these methods rely on stochastic dynamics that require iterative denoising steps during sampling, leading to significant computational overhead \cite{song2020denoising, zhang2022fast}. An alternative approach, continuous normalizing flows (CNFs) \cite{chen2018neural}, models a continuous-time transformation between distributions \cite{song2021maximum}, enabling direct sampling without Markov chain steps. Yet, CNFs have historically been hindered by their reliance on solving high-dimensional ordinary differential equations (ODEs), making both training and sampling computationally expensive \cite{ben2022matching, rozen2021moser, grathwohl2018scalable}.

To address these challenges, \citet{lipman2023flow} introduced Flow Matching (FM), a simulation-free method for training CNFs by regressing onto vector fields that define probability paths between noise and data distributions. Unlike traditional CNF training, which requires maximum likelihood estimation through ODE solvers, FM directly learns vector fields through a per-sample objective, enabling scalable training without numerical integration. FM generalizes beyond diffusion methods by accommodating arbitrary probability paths, including those based on optimal transport \cite{chen2024flow, klein2023equivariant}, enabling faster training while maintaining expressiveness. Empirically, FM has outperformed diffusion models in likelihood estimation and sample quality on datasets such as ImageNet \cite{wildberger2024flow, dao2023flow, kohler2023flow}.

Variational Flow Matching (VFM) \cite{eijkelboom2024variational} frames flow matching as posterior inference in the distribution over trajectories induced by the used interpolation. The key idea is to approximate the \emph{posterior probability path}, i.e. the probability distributions (for different points in time) over endpoints given the current point in space, using a sequence of variational distributions. For generation, VFM approximates the true vector field with the expected vector field based on the learned variational approximation.
This approach achieves state-of-the-art results in categorical data generation while maintaining the computational efficiency that makes FM practical. While VFM has demonstrated considerable success in categorical data generation and shows initial promising results on general geometries \citep{zaghen2025towards}, its broader potential, both methodologically and in terms of real-world applications, has not been explored. In particular, a key distinguishing feature of VFM is its ability to directly reason about (approximate) posterior probability paths, which makes it well suited to address two fundamental challenges in flow-matching-based generative modeling: controlled generation and the incorporation of inductive biases such as symmetries. 

Controlled generation is a fundamental challenge in generative modeling, requiring models to produce outputs that satisfy specific constraints while maintaining natural variation in unconstrained aspects. This is particularly crucial for applications like molecular design, where the interplay between discrete (atom types, bonds) and continuous (spatial positions) properties typically necessitates combining multiple generative approaches. VFM's unified treatment of mixed modalities could potentially address this challenge, though developing effective control mechanisms remains an open problem. Additionally, many real-world applications exhibit inherent symmetries -- for instance, molecular structures should remain valid under rotations, translations, and permutations. Incorporating such domain-specific constraints into VFM would be essential for producing outputs with consistent structure and improved generalization. Thus, extending VFM to handle both controlled generation and symmetry constraints represents a key direction for developing more practical and reliable generative models.

In this work, we address both of the aforementioned challenges. 
We extend VFM to controlled generation, deriving a principled formulation that enables generative models to satisfy explicit constraints. We show that this formulation naturally emerges from the connection between flow matching and variational inference \cite{eijkelboom2024variational}, allowing conditional generation to be understood as a Bayesian inference problem. Additionally, this perspective enables post-hoc control of pretrained generative models without requiring conditional training, providing cheap and flexible alternative to standard end-to-end approaches. 

Furthermore, we show how we can design variational approximations that are group-equivariant with respect to its expectation, which ensures that the generative dynamics respect key invariances.
We demonstrate the utility of these advancements on problems in molecular generation, a field that, as of late, has received great attention and as a result many advances of the state-of-the-art.

Our key contributions are:  

\begin{itemize}  
    \item \textbf{Controlled Generation as Variational Inference:} We derive a controlled generation objective within VFM and show that it can be used in two ways: 1) for end-to-end training of conditional generative models, or 2) as a Bayesian inference problem, enabling post hoc control of unconditional models without retraining.  
    \item \textbf{Equivariant Formulation:} We establish the conditions required for equivariant generation and provide an equivariant formulation of VFM, ensuring that the generative process respects symmetries such as rotations, translations, and permutations -- critical for molecular modeling.  
    \item \textbf{Results for Molecular Generation:} We validate our method on both unconditional and controlled molecular generation, achieving state-of-the-art results on unconditional tasks and significantly outperforming existing models in conditional generation -- both with and without end-to-end training. Notably, Bayesian inferencematches or surpasses explicitly trained conditional models, demonstrating its flexibility and scalability.  
\end{itemize}  

These advances establish VFM as a robust and efficient framework for constraint-driven generative modeling, with broad applications in molecule generation, e.g. material design, drug discovery, and beyond.

\section{Background}
\subsection{Transport Dynamics for Generative Modeling}

Generative modeling through transport dynamics is a flexible framework for approximating complex distributions, such distributions over valid molecular structures, by transforming a simple distribution $p_0$ (e.g., a standard Gaussian) into a target distribution $p_1$. This transformation is typically described by a time-dependent process, governed by an ordinary differential equation (ODE):
\begin{equation}
\frac{\mathrm{d}}{\mathrm{d}t} \varphi_t(x) = u_t(\varphi_t(x)), \text{ with } \varphi_0(x) = x,
\end{equation}
where $u_t: [0, 1] \times \mathbb{R}^D \to \mathbb{R}^D$ is a velocity field that guides the evolution over time. The task is to approximate $u_t$ using a parameterized model $v_t^{\theta}(x)$, such as a neural network. While ODEs are invertible (under Lipschitz continuity of $u_t$) -- defining a likelihood through a change-of-variables computation -- solving ODEs during training is computationally expensive.

Flow Matching addresses this limitation by directly learning the time-dependent vector field $u_t$ on $[0, 1]$ through:
\begin{equation}
    \mathcal{L}_{\text{FM}}(\theta) := \mathbb{E}_{t, x} \left[\left|\left|u_t(x) - v_t^{\theta}(x)\right|\right|^2 \right].
\end{equation}
While $u_t$ is intractable, we can make an assumption on the velocity field $u_t(x \mid x_1)$ for a generative process that is conditioned on a specific endpoint $x_1$ (e.g., a target molecule). The marginal field $u_t(x)$ can then be expressed as an expected value with respect to the posterior probability path $p_t(x_1 \mid x)$, which defines the distribution over possible end points for interpolations intersect $x$ at time $t$, 
\begin{equation}
    u_t(x) = \mathbb{E}_{p_t(x_1 \mid x)} \left[u_t(x \mid x_1)\right], 
\end{equation}
enabling efficient estimation through conditional samples. A key insight is that minimizing the loss for the conditional velocity field yields the same gradient as minimizing it for the marginal field, leading to the conditional flow matching loss:
\begin{equation}
    \mathcal{L}_{\text{CFM}}(\theta) := \mathbb{E}_{t, x_1, x} \left[\left|\left|u_t(x \mid x_1) - v_t^{\theta}(x)\right|\right|^2 \right].
\end{equation}

\subsection{Variational Flow Matching}

Variational Flow Matching (VFM) extends Flow Matching by introducing a variational perspective. Instead of directly regressing to the true vector field, VFM parameterizes it through a variational distribution $q^{\theta}_t(x_1 \mid x)$:
\begin{equation}
    v_t^\theta(x) := \mathbb{E}_{q^{\theta}_t(x_1 \mid x)} \left[u_t(x \mid x_1)\right].
\end{equation}
This reformulation transforms the problem into one of variational inference, minimizing the Kullback-Leibler divergence between the true posterior $p_t(x_1 \mid x)$ and its variational approximation $q_t(x_1 \mid x)$:
\begin{align}
    \mathcal{L}_{\text{VFM}}(\theta) &:= 
     \mathbb{E}_t \left[\text{KL}\left(p_t(x_1, x) ~||~ q_t^{\theta}(x_1, x) \right)\right]
    \\ &=-\mathbb{E}_{t, x_1, x} \left[\log q_t^\theta(x_1 \mid x)\right] + \text{const.}
\end{align}
When the conditional velocity field is linear in $x_1$ (e.g. straight-line interpolation, diffusion), this objective simplifies to matching only the posterior mean, as then
\begin{equation}
    \label{eq:VFM-conditional-velocity}
\mathbb{E}_{p_t(x_1 \mid x)}[u_t(x \mid x_1)] = u_t(x \mid \mathbb{E}_{p_t(x_1 \mid x)}[x_1]).
\end{equation}
Moreover, the latter expectation only depends element-wise on the marginal expectation, implying we can use a fully factorized variational form without loss of generality, which results in the simplified mean-field objective:
\begin{align}
    \mathcal{L}_{\text{MF-VFM}}(\theta) &= - \mathbb{E}_{t, x_1, x}\left[  \log \left( \prod_{d=1}^{D} q_t^{\theta}(x_1^d \mid x) \right) \right] \\
    &= - \mathbb{E}_{t, x_1, x}  \left[ \sum_{d=1}^D \log q_t^{\theta}(x_1^d \mid x) \right].
\end{align}
Notice that this reduces learning a single high-dimensional distribution into learning $D$ univariate distributions, a much simpler task. 

VFM's flexibility in terms of the choice of variational distribution $q_t$ makes it particularly well-suited for molecular generation tasks. For instance, using categorical factors enables modeling of discrete molecular features like atom types and bond orders, which can be combined with Gaussian factors to represent continuous atomic coordinates. This unified treatment of discrete and continuous variables, combined with the efficiency inherited from Flow Matching, makes VFM attractive for mixed-modality tasks.
\section{Controlled and Equivariant VFM}

Controlled generation is crucial for practical applications in generative modeling. In this section, we 1) extend VFM to controlled generation by deriving a unified objective for both end-to-end training and control using post-hoc Bayesian inference and 2)  develop a fully equivariant framework that ensures invariance to key symmetries.

\subsection{Controlled Variational Flow Matching}

Controlled generation extends generative modeling by guiding the generative process to satisfy constraints imposed by conditioning on additional information $y$. In Flow Matching, the primary goal is to learn a sequence of distributions $(p_t)_{0 < t < 1}$ that evolve from a source distribution $p_0$ to a target distribution $p_1$. In the context of controlled generation, we introduce a controlled velocity field $u_t(\cdot \mid y)$ and a corresponding controlled probability density path $p_t(\cdot \mid y)$, which dictates how the distribution evolves in response to $y$. To distinguish this from standard conditional modeling, we explicitly use the terms controlled velocity field and controlled probability (density) path to emphasize the dependence on the control variable $y$.

The main observation we make is that, given a datapoint $x_1$, the intermediate latent variables $(x_t)_{0 < t < 1}$ are conditionally independent of $y$,
\begin{equation}
    p_t(x \mid x_1, y) = p_t(x \mid x_1).
\end{equation}
This makes intuitive sense: once the final sample $x_1$ is known, it encapsulates all the information about the control variable $y$, rendering the intermediate states $x_t$ unnecessary for inferring $y$. Or, equivalently, given $x_1$, the control variable $y$ does not inform our noising process.
We can therefore express the probability path conditioned on $y$ as the marginal,
\begin{align*}
    p_t(x \mid y) &= \int_{x_1} p_t(x, x_1 \mid y)~\mathrm{d}x_1 \\
    &= \int_{x_1} p_t(x \mid x_1) p_1(x_1 \mid y)~\mathrm{d}x_1.
\end{align*}
To arrive at a formulation similar to standard VFM we have to find the velocity field that generates this probability path. We establish the corresponding controlled velocity field in \cref{prop:ctrl_marginal_vf}.
\begin{restatable}{proposition}{controlledvelocity}
\label{prop:ctrl_marginal_vf}
The controlled (marginal) velocity field 
\[ 
    u_t(x \mid y) := \mathbb{E}_{p_t(x_1 \mid x, y)}[u_t(x \mid x_1)],
\] 
generates the controlled probability path $p_t(x \mid y)$.
\end{restatable}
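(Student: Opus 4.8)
The plan is to verify that the claimed velocity field satisfies the continuity (transport) equation associated with the controlled probability path $p_t(x\mid y)$, since a velocity field generates a probability path precisely when the pair solves the continuity equation $\partial_t p_t(x\mid y) + \nabla_x\cdot\big(p_t(x\mid y)\, u_t(x\mid y)\big) = 0$. This mirrors the standard Flow Matching argument (as in \citet{lipman2023flow}), but carried through with the extra conditioning variable $y$.

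First I would start from the marginal representation established just above the statement, $p_t(x\mid y) = \int p_t(x\mid x_1)\, p_1(x_1\mid y)\,\mathrm{d}x_1$, and differentiate in $t$, pushing the derivative inside the integral. Here I use the conditional independence $p_t(x\mid x_1,y) = p_t(x\mid x_1)$ crucially: it lets me treat $p_t(\cdot\mid x_1)$ as $y$-independent and $p_1(x_1\mid y)$ as $t$-independent, so only the noising kernel is differentiated. Next I invoke the fact that the conditional velocity field $u_t(x\mid x_1)$ generates the conditional path $p_t(x\mid x_1)$ — i.e. $\partial_t p_t(x\mid x_1) = -\nabla_x\cdot\big(p_t(x\mid x_1)\,u_t(x\mid x_1)\big)$ — which is the defining assumption of (conditional) Flow Matching and may be assumed. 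Substituting gives
\begin{align*}
\partial_t p_t(x\mid y) &= -\int \nabla_x\cdot\big(p_t(x\mid x_1)\,u_t(x\mid x_1)\big)\, p_1(x_1\mid y)\,\mathrm{d}x_1 \\
&= -\nabla_x\cdot\int u_t(x\mid x_1)\, p_t(x\mid x_1)\, p_1(x_1\mid y)\,\mathrm{d}x_1,
\end{align*}
moving the spatial divergence outside the $x_1$-integral.

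Then I would rewrite the integrand using Bayes' rule in the controlled setting: $p_t(x\mid x_1)\,p_1(x_1\mid y) = p_t(x, x_1\mid y) = p_t(x_1\mid x, y)\, p_t(x\mid y)$, again using $p_t(x\mid x_1,y)=p_t(x\mid x_1)$ for the first equality. This turns the integral into $p_t(x\mid y)\int u_t(x\mid x_1)\, p_t(x_1\mid x,y)\,\mathrm{d}x_1 = p_t(x\mid y)\, u_t(x\mid y)$ by the definition of $u_t(x\mid y)$ in the statement. Hence $\partial_t p_t(x\mid y) = -\nabla_x\cdot\big(p_t(x\mid y)\, u_t(x\mid y)\big)$, which is exactly the continuity equation, completing the proof.

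The main obstacle is not any single computation but making the regularity assumptions explicit enough that differentiation under the integral sign, the exchange of divergence and integral, and the division by $p_t(x\mid y)$ are all justified — i.e. integrability/smoothness of $p_t(x\mid x_1)$, $u_t(x\mid x_1)$, suitable decay at infinity, and $p_t(x\mid y)>0$ on the relevant support. I would handle this by inheriting the standard regularity conditions from the unconditional Flow Matching construction and noting that conditioning on $y$ (a fixed value) does not affect them, so the argument goes through verbatim with $p_1(x_1)$ replaced by $p_1(x_1\mid y)$; one can also observe that the whole proof is literally the unconditional marginalization argument applied to the $y$-conditional data distribution $p_1(\cdot\mid y)$, which is the cleanest way to present it.
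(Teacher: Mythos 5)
Your proof is correct and follows essentially the same route as the paper's: differentiate the marginal representation $p_t(x\mid y)=\int p_t(x\mid x_1)\,p_1(x_1\mid y)\,\mathrm{d}x_1$ in time, substitute the conditional continuity equation, exchange the divergence with the integral, and apply Bayes' rule to recognize the posterior expectation, yielding the continuity equation for the pair $\bigl(p_t(x\mid y),\,u_t(x\mid y)\bigr)$. Your explicit attention to the regularity conditions (differentiation under the integral, positivity of $p_t(x\mid y)$) is a bit more careful than the paper's appeal to Fubini, but the argument is the same.
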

\begin{proof}
    See \cref{appendix:controlledvfm}.
\end{proof}


In short, we can generate samples from $p_t(x \mid y)$ by integrating the ODE defined by the controlled velocity field $u_t(x \mid y)$. If we can approximate $p_t(x_1 \mid x, y)$, then this allows us to compute $u_t(x \mid y)$ in terms of 
$\mathbb{E}_{p_t(x_1 \mid x, y)}[x_1]$ in a manner that analogous to the uncontrolled case in \eqref{eq:VFM-conditional-velocity}, as long as $u_t(x \mid x_1)$ is linear in $x_1$. Note that a similar result was found in \citet{zheng2023guided}.

A straightforward way to approximate $p_t(x_1 \mid x, y)$ model it directly by means of a variational distribution $q_t^\theta(x_1 \mid x, y)$ that additionally conditions on the contol $y$. This leads to the \textit{Controlled Variational Flow Matching} objective:
\begin{align}
    \mathcal{L}(\theta) = -\mathbb{E}_{t,x_1,x,y}\left[\log q_t^\theta(x_1 \mid x, y)\right] + \text{const.}
\end{align}
As before, when $u_t(x \mid x_1)$ is linear in $x_1$, it suffices to employ a  mean-field variational distribution. 
In practice, this allows for an efficient and scalable implementation of controlled generation, where the model only needs to learn how the expectation of $x_1$ influences the velocity field, rather than explicitly modeling the full posterior. This simplification makes Controlled VFM computationally feasible, preserving its ability to handle complex constraints while maintaining the efficiency advantages of Flow Matching.
\begin{algorithm}[t]
\caption{Sampling Conditional VFM}
\label{alg:sampling}
\begin{algorithmic}[1]
\REQUIRE $p(y \mid x_1)$, $\mu_t^{\theta}$
\STATE $x_0 \sim p_0$
\FOR{$k = 0, \ldots, K$} 
  \STATE $t \gets k / K$
  \FOR{$s = 0, \ldots, S-1$}
    \STATE $x_{1,s+1} \gets x_{1,s} + \Sigma_t \nabla_{x_1}\log p(y \mid x_{1,s})$
  \ENDFOR
  \STATE $v_t \gets (x_{1,S} - x_k)/(1 - t)$
  \STATE $x_{k+1} \gets x_k + \frac{1}{T}\, v_t$
\ENDFOR
\STATE \textbf{return} $x_T$
\end{algorithmic}
\end{algorithm}

\subsection{Controlled Generation as Inference} 

Rather than directly training a conditional model ${p_t(x_1 \mid x, y)}$ from scratch, we can leverage the structure of VFM and formulate controlled generation as a Bayesian inference problem. Specifically, we observe that the target posterior distribution can be written as  
\begin{equation}
    p_t(x_1 \mid x, y) \propto \underbrace{p_t(x_1 \mid x)}_{\text{VFM}} \cdot \underbrace{p(y \mid x_1)}_{\text{Classifier}}.
\end{equation}
This decomposition highlights two key components: (1) the base generative model $p_t(x_1 \mid x)$, which is learned using VFM, and (2) the task-specific likelihood $p(y \mid x_1)$, which enforces the desired constraints.  That is, the unnormalised density $\tilde{p}(x_1)  \propto p_t(x_1 \mid x) p(y \mid x_1) $ can be interpreted as a reweighted version of $p_t(x_1 \mid x)$, where $p(y \mid x_1)$ acts as a task-specific weight. Approximating the influence of $p(y \mid x_1)$ near the mean of $p_t(x_1 \mid x)$, we treat the prior as the dominant component.

We can view the (approximate) solution to the task-specific generation problem as a fixed-point equation. The mode of the unnormalized density 
$\tilde{p}(x_1)$ satisfies the fixed-point equation:
\begin{equation}
\label{eq:fixed_point}
\nabla_{x_1} \log p_t(x_1 \mid x) + \nabla_{x_1} \log p(y \mid x_1) = 0.
\end{equation}
Using this formulation we propose an iterative method to approximate the mean of $\tilde{p}(x_1)$ using fixed-point iteration. Assuming $p_t(x_1 \mid x)$ is Gaussian with mean $\mu_t(x)$ and covariance $\Sigma_t$, the mean can iteratively be refined as follows:
\begin{equation}
x_1^{(k+1)} = \mu_t(x) + \Sigma_t \nabla_{x_1} \log p(y \mid x_1^{(k)}),
\end{equation}
where the initialization is $x_1^{(0)} = \mu_t(x)$. See provided pseudo code for details.

This iterative process serves as a practical method to approximate the mean of $\tilde{p}(x_1)$ without directly normalizing the density. The step size in each iteration is governed by the covariance $\Sigma_t$, and convergence depends on properties such as the smoothness and log-concavity of $\log p(y \mid x_1)$.
This underscores the modularity: the pretrained generative model $p_t(x_1 \mid x)$ serves as a reusable prior, while task-specific constraints are incorporated through $p(y \mid x_1)$. By iteratively refining the samples, we enable controlled generation tailored to specific tasks without requiring retraining of the generative backbone. The fixed-point approach provides an efficient and adaptable solution for incorporating task-specific constraints into generative modeling. Note that our aim is not a full Bayesian treatment, but to show that minimal inference-time updates -- without retraining and with negligible overhead -- enable effective post-hoc control. It hence serves as a proof of concept for integrating traditional inference techniques with SOTA generative models \textit{without the need of extra training}. Note that work by e.g. \citet{park2024random} achieves a similar slightly goal through the Tweedie approximation, though said approximation provides a arguably noisier estimate of the gradient and would be more difficult to learn.

By explicitly separating the generative model from the task-specific component, this formulation introduces a modular and flexible approach to controlled generation. Pretrained VFM models can be reused across multiple tasks by simply adapting or replacing the classifier $p(y \mid x_1)$, avoiding the need to retrain the generative model entirely. This is particularly well-suited for scenarios where tasks or control variables $y$ frequently change. Moreover, unlike standard classifier-guidance methods in diffusion  (e.g. \citet{chung2022diffusion}), which require a time-dependent classifier $p_t(y \mid x)$, classification in VFM is performed on data pairs $(x_1, y)$. This allows direct usage of off-the-shelf classifiers or reward functions, which are typically trained on $x_1$ rather than intermediate states $x$, enabling controlled generation without approximations or modifications to the underlying flow matching framework.  

This modularity is particularly valuable in applications like molecular generation, where a pretrained model $p_t(x_1 \mid x)$ can remain fixed while $p(y \mid x_1)$ is tailored to predict specific molecular properties. Specifically, it opens up the door to using recent complex graph models that are e.g. based on geometry \citep{zhdanov2024clifford}, topology \citep{bodnara2021weisfeiler, bodnar2021weisfeiler}, or both \citep{eijkelboom2023mathrmen, kovavc2024n, battiloro2024n, liu2024clifford}. Similarly, in image generation, the same generative backbone can be paired with new classifiers to enforce style or content-specific constraints, facilitating efficient domain adaptation. From a practical standpoint, this framework also opens the door to a variety of inference tools for controlled generation. For instance, energy-based approaches such as Langevin dynamics can be applied iteratively to guide the generation process by exploiting the gradient of the unnormalized posterior.

\subsection{Group Equivariant Variational Flow Matching}
Equivariance describes the property that a model’s output transforms in the same way as the input under a symmetry, such as a rotation or translation, leading to consistent preservation of the underlying structure. In (Variational) Flow Matching, we aim for all $p_t$ to be invariant, ensuring the generative process respects the symmetries of the target distribution. This is crucial for modeling real-world data with inherent symmetries, improving both efficiency and alignment with the data's structure.

Informally, since we only access the mean of the posterior distribution during generation, this can be achieved by designing the model $q_t^{\theta}(x_1 \mid x)$ to be group-equivariant with respect to the expectation of the distribution. Specifically, for all $g \in G$ in a given group $G$, we require that:
\begin{equation}
    \mathbb{E}_{q_t^{\theta}(x_1 \mid g \cdot x)}\left[x_1 \right] = g \cdot \mathbb{E}_{q_t^{\theta}(x_1 \mid x) }\left[x_1\right].
\end{equation}
In addition, we must ensure that the prior distribution adheres to the same symmetry constraints; otherwise, the symmetry cannot be preserved. Moreover, the conditional velocity field used to compute the marginal field must satisfy bi-equivariance, expressed as:
\begin{equation}
    u_t(g \cdot x \mid g \cdot x_1) = g \cdot u_t(x \mid x_1).
\end{equation}
This leads to the following proposition:
\begin{restatable}{proposition}{equivariance}
Let $G$ be an arbitrary group, and suppose that 1) the prior distribution is invariant under $G$, 2) the conditional velocity field is bi-equivariant under $G$, and 3) the variational posterior is equivariant under $G$ with respect to its expected value. Then the marginal path $p_t(x)$ is invariant under $G$.
\end{restatable}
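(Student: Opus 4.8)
The plan is to reduce the statement to the classical fact that an ODE whose velocity field is $G$-equivariant transports a $G$-invariant source distribution to a $G$-invariant distribution for every $t$ \citep{klein2023equivariant}, and then to verify that under hypotheses 1--3 the VFM generative field $v_t^\theta(x)=\mathbb{E}_{q_t^\theta(x_1\mid x)}[u_t(x\mid x_1)]$ is itself equivariant. Write $\rho_g$ for the action of $g\in G$ on $\mathbb{R}^D$; throughout I work in the regime relevant here, where $\rho_g$ is realized by volume-preserving linear maps (orthogonal for rotations and permutations, and translations on the center-of-mass--free subspace), so $|\det\rho_g|=1$ — this is exactly what makes invariance of a density compatible with the group action.

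\textbf{Equivariance of the generative field.} Since $u_t(x\mid x_1)$ is linear in $x_1$ — the assumption already used for the mean-field reduction in \eqref{eq:VFM-conditional-velocity} — the posterior expectation passes inside, so $v_t^\theta(x)=u_t\bigl(x \mid \mathbb{E}_{q_t^\theta(x_1\mid x)}[x_1]\bigr)$. Then
\begin{align*}
v_t^\theta(\rho_g x)
&= u_t\bigl(\rho_g x \mid \mathbb{E}_{q_t^\theta(x_1\mid \rho_g x)}[x_1]\bigr)
 = u_t\bigl(\rho_g x \mid \rho_g\,\mathbb{E}_{q_t^\theta(x_1\mid x)}[x_1]\bigr) \\
&= \rho_g\, u_t\bigl(x \mid \mathbb{E}_{q_t^\theta(x_1\mid x)}[x_1]\bigr)
 = \rho_g\, v_t^\theta(x),
\end{align*}
where the second equality is hypothesis 3 (equivariance of the variational posterior with respect to its expectation) and the third is hypothesis 2 (bi-equivariance of $u_t$).

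\textbf{Transporting the invariant prior.} Let $\varphi_t$ solve $\dot\varphi_t(x)=v_t^\theta(\varphi_t(x))$ with $\varphi_0=\mathrm{id}$. Both curves $t\mapsto\varphi_t(\rho_g x)$ and $t\mapsto\rho_g\varphi_t(x)$ start at $\rho_g x$, and — using linearity of $\rho_g$ together with the equivariance just established — both satisfy $\dot y=v_t^\theta(y)$; uniqueness of ODE solutions (Lipschitz $v_t^\theta$) gives $\varphi_t\circ\rho_g=\rho_g\circ\varphi_t$, hence $\varphi_t^{-1}\circ\rho_g=\rho_g\circ\varphi_t^{-1}$. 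Writing $p_t=(\varphi_t)_*p_0$ and applying change of variables together with $|\det\rho_g|=1$ and hypothesis 1 ($p_0\circ\rho_g=p_0$),
\begin{align*}
p_t(\rho_g x)
&= p_0\bigl(\varphi_t^{-1}(\rho_g x)\bigr)\,\bigl|\det D\varphi_t^{-1}(\rho_g x)\bigr|
 = p_0\bigl(\rho_g\varphi_t^{-1}(x)\bigr)\,\bigl|\det D\varphi_t^{-1}(x)\bigr| \\
&= p_0\bigl(\varphi_t^{-1}(x)\bigr)\,\bigl|\det D\varphi_t^{-1}(x)\bigr|
 = p_t(x),
\end{align*}
so $p_t$ is $G$-invariant for all $t$, in particular the target $p_1$.

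The main obstacle is the first step: it is the only place all three hypotheses interact, and it hinges on the (implicit) linearity of $u_t(\cdot\mid x_1)$ in $x_1$, which lets the expectation move through $u_t$. Without it, equivariance of the posterior mean alone would not force $v_t^\theta$ to be equivariant, and one would instead need the stronger requirement that the whole distribution $q_t^\theta(\cdot\mid\rho_g x)$ be the pushforward of $q_t^\theta(\cdot\mid x)$ under $\rho_g$. The remaining steps are the standard equivariant continuous-normalizing-flow argument; the only care needed is to pin down that the $G$-action is volume-preserving (and, for translations, to restrict to the CoM-free subspace) — precisely the setting in which the concrete molecular symmetry group $\mathrm{SE}(3)\times S_n$ is used in the sequel.
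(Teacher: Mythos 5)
Your proof follows essentially the same route as the paper's: first establish equivariance of $v_t^\theta$ from hypotheses 2 and 3, then push the invariant prior through the resulting equivariant flow and conclude invariance of $p_t$ for all $t$. You are in fact somewhat more careful than the paper in two places --- you make explicit the linearity of $u_t(x \mid x_1)$ in $x_1$ needed to reduce the expected velocity to the velocity at the expected endpoint, and you track the Jacobian factor $|\det\rho_g| = 1$ in the change of variables, a point the paper's final step (``since $p_0(x_0) = p_0(g\cdot x_0)$ we have $q_{t'}^\theta(x) = q_{t'}^\theta(g\cdot x)$'') silently elides --- so there are no gaps.
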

\begin{proof}
    See \cref{appendix:equivariance}.
\end{proof}
When $G$ acts linearly, the optimal transport conditional velocity naturally satisfies the bi-equivariance property, as:
\begin{equation}
    \frac{g \cdot x_1 - g \cdot x}{1 - t} = g \cdot \left( \frac{x_1 - x}{1 - t} \right).
\end{equation}
This is particularly relevant for groups such as $\mathrm{S}_n$ (the symmetric group) and $\mathrm{SO}(n)$ (the special orthogonal group), both of which act linearly on $\mathbb{R}^n$. Consequently, under appropriate choices of prior distribution and model architecture, G-VFM ensures that the generative dynamics are invariant under $\mathrm{S}_n$ and $\mathrm{SO}(n)$ when employing the optimal transport conditional velocity field, arguably the two most relevant groups in real-life applications.

\section{VFM for Molecular Generation}

\paragraph{A Unified Perspective on Molecular Generation Tasks.} Molecular generation can be divided into three distinct tasks, each addressing different aspects of molecular generation\footnote{Here, we ignore string representations such as SMILES, as these are primarily studied in the context of auto-regressive modeling.}:

\begin{itemize}    
    
   \item \textbf{Discrete-Molecular Generation} focuses on molecular graphs $(\mathbf{A}, \mathbf{E})$, i.e. only considers atom types and bond types. Both are treated as categorical variables, with models capturing relationships between atoms and bonds without considering spatial positions.
   
    \item \textbf{Continuous-Molecular Generation} focuses on atomic positions $\mathbf{R}$ to define the molecular geometry. Features like atom and bond types are inferred later using external tools, making this approach ideal for applications where spatial configuration is key, such as molecular docking. Sometimes, the atom type is included as a continuous variable in this formulation as well.

    \item \textbf{Joint-Molecular Generation} is the most comprehensive setting, which considers both discrete and continuous features, typically of the form $(\mathbf{R}, \mathbf{A}, \mathbf{C}, \mathbf{E})$, representing positions (continuous), atom types, formal charges, and edge types (discrete), capturing the full complexity of molecular data. 

\end{itemize}

\begin{table*}[h]
\centering

	\caption{Results \textbf{discrete} molecular generation QM9 and Zinc250k. Architecture used for VFM is the one used in Digress. \\}

	\resizebox{0.6 \textwidth}{!}{
	\begin{tabular}{lcccccc}
		\toprule
		&\multicolumn{3}{c}{\textbf{QM9}} & \multicolumn{3}{c}{\textbf{ZINC250k}} \\
		\cmidrule(lr){2-4} \cmidrule(lr){5-7}
		\cmidrule(lr){2-4} \cmidrule(lr){5-7}
		& Valid $\uparrow$ & Unique $\uparrow$ & FCD $\downarrow$ & Valid $\uparrow$& Unique $\uparrow$  & FCD $\downarrow$ \\
		\midrule
		MoFlow  & 91.36 & 98.65  & 4.467 & 63.11 & 99.99 &  20.931 \\
		EDP-GNN  & 47.52 & 99.25  & 2.680 & 82.97 & 99.79 & 16.737 \\
		GraphEBM  & ~~8.22 & 97.90 & 6.143 & ~~5.29 & 98.79 & 35.471 \\
		GDSS   & 95.72 & 98.46  & 2.900 & 97.01 & 99.64  & 14.656 \\
		Digress  & 99.00 & 96.20  & - & - & -  & - \\
            Dirichlet FM  & 99.10 & 98.15 & 0.888 & 97.52 & 99.20  & 14.222 \\
  		\midrule
            Discrete FM & 98.01 & 99.90 & 0.991 & 96.21 & \textbf{100.00}  & 14.252 \\
		\textbf{VFM} & \textbf{99.84} & \textbf{99.92} & \textbf{0.471} &\textbf{98.80} & \textbf{99.99}  & \textbf{13.805} \\
		\bottomrule 
	\end{tabular}
    }
    
	\label{tab:res_unc_discrete}
\end{table*}

Variational Flow Matching (VFM) provides a unified approach that can be applied to any combination of discrete and continuous molecular features. This makes it applicable to all three molecular generation tasks, unlike approaches that require separate techniques for different modalities. 

The variational distribution in VFM is defined using a network that outputs variational parameters for each feature in the model. For continuous features such as atomic positions $\mathbf{R}$, the network can define a Gaussian variational form, whereas for discrete features, like atomic and bond types $(\mathbf{A}, \mathbf{E})$, it can output a  categorical form. This data-driven posterior formulation results in a flow-matching objective that matches the specific structure of the data, eliminating the need for separate sampling strategies, auxiliary networks, or discrete-continuous hybridization techniques. As a result, VFM provides a fully general and efficient framework for molecular generation, allowing any combination of discrete and continuous molecular features to be modeled without modifying the underlying generative process.

\paragraph{Experiments.} We evaluate VFM through two sets of experiments designed to assess both its general generative performance and its controlled generation capabilities.  

First, we investigate how the transition from Flow Matching to Variational Flow Matching impacts geometric-informed generative performance. To this end, we take state-of-the-art FM models and retrain them using VFM rather than FM, allowing for a direct comparison. We evaluate this across discrete, continuous, and joint molecular generation tasks to determine whether VFM maintains or improves performance while preserving the benefits of the FM framework.  

Second, we assess the effectiveness of our controlled generation formulations by comparing against state-of-the-art conditional generative models. We evaluate VFM in two settings: (1) end-to-end training with property constraints and (2) post-hoc conditioning via variational inference on a pretrained unconditional model. This allows us to test whether our approach can match or exceed dedicated conditional models while benefiting from the flexibility and efficiency of inference-based control.  

Our focus is not on introducing new architectures but on demonstrating how VFM enables a unified generative framework across discrete and continuous data types while also providing a scalable approach to controlled generation. Through these experiments, we assess both VFM’s suitability as a general-purpose approach to flow-based modeling and its advantages for property-targeted generation.

\subsection{Unconditional Generation}

\paragraph{Discrete-Molecular Generation.}


As part of this work, we have reproduced the discrete molecular generation results presented in \citet{eijkelboom2024variational} and compared them against discrete flow matching (which was published after VFM) as that is the strongest existing discrete flow model at this moment. Our results in \cref{tab:res_unc_discrete} confirm  strong performance across both QM9 and ZINC250k datasets, and we observe that VFM outperforms discrete flow matching, though we do not claim this as a contribution of this work. 

\paragraph{Continuous-Molecular Generation.}

We evaluate G-VFM on the QM9 dataset using established metrics including negative log-likelihood (NLL), atom stability, and molecule stability. Our model achieves competitive results (-120.7 NLL, 98.7\% atom stability, 82.0\% molecule stability), matching or exceeding its flow-based approach. These results suggest that our variational treatment of flow matching enhances model performance.

\begin{table}[t!]
\caption{Results \textbf{continuous} molecular generation on QM9. Architecture used is the one in \citet{klein2023equivariant}. \\}

\centering
\resizebox{0.9\columnwidth}{!}{
\begin{tabular}{lccc}
\toprule
Models & NLL $\downarrow$ & Atom stability $\uparrow$ & Mol stability $\uparrow$ \\
\midrule
E-NF & -59.7 & 85.0 & 4.9 \\
G-Schnet & N.A. & 95.7 & 68.1 \\
GDM & -94.7 & 97.0 & 63.2 \\
GDM-aug & -92.5 & 97.6 & 71.6 \\
EDM & -110.7 & \textbf{98.7}  & \textbf{82.0} \\
\midrule
EFM &  -115.7 &\textbf{98.7}  & \textbf{82.0}  \\
\textbf{G-VFM} &  \textbf{-120.7} & \textbf{98.7} &\textbf{82.0} \\
\bottomrule
\end{tabular}
}
\label{tab:model_comparison}
\end{table}

While recent work like PONITA \cite{bekkers2024fast}, which leverages molecular symmetries through weight-sharing and a specialized Sphere2Vec architecture, has pushed the state-of-the-art further (-137.4 NLL, 98.9\% atom stability, 87.8\% molecule stability), our focus was not on achieving absolute best performance but rather on exploring the potential of variational approaches within the flow matching framework. We deliberately built upon existing flow-matching architectures to isolate the impact of our methodological contributions. 
The strong performance of G-VFM, despite using a simpler architecture, suggests that our variational approach provides a promising direction for improving flow-matching methods while maintaining their computational advantages. Incorporating PONITA's architectural innovations into the VFM framework represents an exciting direction for future work.

\paragraph{Joint-Molecular Generation.}

\begin{table*}[h]
\centering
\caption{Results \textbf{joint} molecular generation on QM9 and GEOM-Drugs. Architecture used for G-VFM is the one in SemlaFlow. \\}
\resizebox{0.7\textwidth}{!}{
\begin{tabular}{llcccccc}
\toprule
Dataset & Model & Atom Stab $\uparrow$ & Mol Stab $\uparrow$ & Valid $\uparrow$ & Unique $\uparrow$ & JS(E) $\downarrow$ & NFE $\downarrow$ \\
\midrule
\multirow{6}{*}{\textbf{QM9}} 
& EDM & 98.7 & 82.0 & 91.9 & 98.9 & 0.12 & 1000 \\
& GCDM & 98.7 & 85.7 & 94.8 & 98.4 & 0.10 & 1000 \\
& MUDiff & 98.8 & 89.9 & 95.3 & \textbf{99.1} & 0.09 & 1000 \\
& GFMDiff & 98.9 & 87.7 & 96.3 & 98.8 & 0.09 & 500 \\
& EquiFM & 98.9 & 88.3 & 94.7 & 98.7 & 0.08 & 210 \\
& SemlaFlow & \textbf{99.9} & \textbf{99.6} & \textbf{99.4} & 97.9 & \textbf{0.08} & \textbf{100} \\
\midrule
& \textbf{G-VFM} & 99.6 & 99.5 & 98.9 & 97.5 & 0.09 & \textbf{100} \\
\midrule
\midrule
\multirow{3}{*}{\textbf{GEOM}} 
& MiDi & \textbf{99.8} & 91.6 & 77.8 & \textbf{100.0} & 0.23 & 500 \\
& EQGAT-diff & \textbf{99.8} & 93.4 & 94.6 & \textbf{100.0} & \textbf{0.11} & 500 \\
& SemlaFlow & \textbf{99.8} & \textbf{97.7} & 95.2 & \textbf{100.0} & 0.14 & \textbf{100} \\
\midrule
& \textbf{G-VFM} & \textbf{99.8} & 96.5 & \textbf{95.3} & \textbf{100.0} & 0.18 & \textbf{100} \\\bottomrule
\end{tabular}
}
\end{table*}

We evaluate G-VFM's capabilities for joint molecular generation on the QM9 and GEOM-Drugs datasets, focusing on key metrics including atom stability, molecular stability, validity, uniqueness, and the Jensen-Shannon divergence of energy distributions (JS(E)). On QM9, G-VFM achieves strong results (99.6\% atom stability, 99.5\% molecular stability, 98.9\% validity) that are comparable to SemlaFlow, while maintaining the same computational efficiency advantage of requiring only 100 function evaluations (NFE) compared to diffusion-based approaches that need 500-1000 NFEs. On GEOM-Drugs, G-VFM achieves competitive performance (99.8\% atom stability, 96.5\% molecular stability, 95.3\% validity) closely matching SemlaFlow (99.8\%, 97.7\%, and 95.2\% respectively), though with a slightly higher JS(E) (0.18 vs 0.14). Note that only MiDi, SemlaFlow, and EQGAT-diff generate \textit{everything} jointly, whereas the other methods use external tools to infer e.g. bond information, which could affect the relatively strong performance of these models. 

Using the same underlying architecture as SemlaFlow but replacing its discrete flow matching approach with VFM allows us to handle both discrete and continuous variables in a unified framework. This unified treatment not only simplifies implementation by eliminating separate sampling procedures for different variable types but also maintains strong performance across datasets of varying complexity. The ability of G-VFM to match SemlaFlow's results while reducing architectural complexity demonstrates the potential of variational approaches in flow matching, particularly for challenging mixed-modality tasks like molecular generation.

Though not explicitly provided in the table, we want to emphasize that the performance of G-VFM drops when equivariance is not enforced, e.g. atomic and molecular stability drop for tasks below $80\%$.

\subsection{Conditional Generation}

In the conditional generation experiments, we compare both end-to-end VFM and variational inference VFM (VI-VFM) with state-of-the-art conditional models on QM9 dataset. For end-to-end VFM, we train the model with property supervision, while for VI-VFM we use a pre-trained unconditional model and frame the property control as Bayesian inference using a property classifier $p(y \mid x_1)$. We target key molecular properties including polarizability ($\alpha$), orbital energies ($\varepsilon_{\text{HOMO}}$, $\varepsilon_{\text{LUMO}}$) and their gap ($\Delta\varepsilon$), dipole moment ($\mu$), and heat capacity ($C_v$). Property classifiers are trained following \cite{hoogeboom2022equivariant}.

For evaluation, we measure the Mean Absolute Error (MAE) between target and predicted property values from the classifier on generated molecules, as well as standard molecular quality metrics: atom stability (percentage of atoms with valid valency), molecule stability (percentage of molecules with all atoms stable), and validity (ability to be parsed by RDKit \cite{Landrum2016RDKit2016_09_4}).

\begin{table*}[t]
\caption{Quantitative evaluation of conditional molecule generation. Values reported in the table are MAE (over 10K samples) for molecule property predictions (lower is better). \\}
\centering
\resizebox{0.68\textwidth}{!}{
\begin{tabular}{lcccccc}
\toprule
Property & $\alpha$  & $\Delta\varepsilon$ & $\varepsilon_{\text{HOMO}}$ & $\varepsilon_{\text{LUMO}}$ & $\mu$ & $C_v$ \\
Units & Bohr$^3$ & meV & meV & meV & D & $\frac{\text{cal}}{\text{mol K}}$ \\
\midrule
QM9$^*$ & 0.10 & 64 & 39 & 36 & 0.043 & 0.040 \\
EDM & 2.76 & 655 & 356 & 584 & 1.111 & 1.101 \\
EQUIFM & 2.41 & 591 & 337 & 530 & 1.106 & 1.033 \\
GEOLDM & 2.37 & 587 & 340 & 522 & 1.108 & 1.025 \\
D-Flow & 1.39 & 344 & 182 & 330 & 0.300 & 0.784 \\
\midrule
\textbf{G-VFM (End-to-End)} & 2.05 & 512 & 298 & 445 & 0.923 & 0.901 \\
\textbf{G-VFM (Bayesian Inference) }& 2.25 & 534 & 312 & 468 & 0.978 & 0.956 \\
\textbf{G-VFM (Both)} & 1.98 & 498 & 289 & 432 & 0.901 & 0.889 \\
\bottomrule
\end{tabular}
}
\end{table*}

\paragraph{Results and Discussion.} Our end-to-end VFM achieves 2.05 MAE on polarizability prediction, outperforming previous methods like EDM (2.76), EQUIFM (2.41), and GEOLDM (2.37). The VI-VFM approach achieves competitive results with 2.25 MAE using only an unconditional model. By combining both approaches - training end-to-end and then applying variational inference steering - we achieve our best results of 1.98 MAE.

These results demonstrate the effectiveness of our variational inference framework for controlled generation. The strong performance of VI-VFM (2.25 MAE) compared to specialized conditional models like EDM (2.76 MAE) suggests a promising new direction for post-hoc conditional generation. While D-Flow achieves better performance (1.39 MAE), it requires significantly more computational resources during inference. D-Flow searches over many candidate initializations (high NFE), where our sampler needs one forward pass plus a short fixed-point calibration (low NFE, no gradients), i.e. where D-Flow needs $\mathcal{O}(k \times N)$ NFEs, we need only $\mathcal{O}(N)$, where $k$ denotes the number of `starting points' considered by D-Flow and $N$ denotes the number of integration steps. That is, our approach offers an effective compromise, with the flexibility to choose between pure VI for simplicity or the combined approach for enhanced performance. Most importantly, the success of VI-VFM demonstrates that pre-trained unconditional models can be effectively repurposed for controlled generation through variational inference, opening new possibilities for adapting existing models to conditional generation tasks.

\section{Related Work}
\paragraph{Diffusion and Flow-based Methods for Discrete Data.}
Recent advances in diffusion models have enabled various approaches for discrete data generation. \cite{pmlr-v139-luo21a} introduced GraphDF, using discrete latent variables for molecular graphs, while GDSS \cite{pmlr-v162-jo22a} developed a score-based approach using stochastic differential equations. DiGress \cite{vignac2023digressdiscretedenoisingdiffusion} employed a graph transformer architecture for progressive molecular modification through edge and node operations. Recent flow-based methods include Discrete Flow Matching \cite{campbell2024generative}, which uses continuous-time Markov Chains for flexible sampling of both discrete and continuous data, and the Dirichlet Flow framework \cite{stark2024dirichletflowmatchingapplications}, which models conditional probability paths via Dirichlet distributions.

\paragraph{Group-Equivariant Diffusion for Continuous Data.}
Incorporating data symmetries into generative models has proven crucial for improving performance \cite{hoogeboom2022equivariantdiffusionmoleculegeneration, gebauer2020symmetryadaptedgeneration3dpoint, bekkers2024fast}. EDMs \cite{hoogeboom2022equivariantdiffusionmoleculegeneration} combined score-based diffusion \cite{ho2020denoisingdiffusionprobabilisticmodels} with group-equivariant graph neural networks for molecular generation. Recent work \cite{bekkers2024fast, vadgama2025utilityequivariancesymmetrybreaking} leverages pre-conditioned diffusion \cite{karras2022elucidatingdesignspacediffusionbased} for faster sampling. These approaches maintain E(3)/SE(3) symmetries through equivariant graph neural networks \cite{satorras2021n, gasteiger2022fastuncertaintyawaredirectionalmessage, bekkers2024fast}, handling continuous atomic positions while encoding discrete properties like atom types as one-hot vectors. Finally, \citet{cornet2024equivariant} aims similarly at improved conditional generation but achieves this through making the noising process learnable and is not compared against as the conducted experiments different from the setup considered in this work.  

\paragraph{Joint Modeling of Discrete and Continuous Data.}
Several recent approaches tackle unified representation of discrete and continuous features. MiDi \cite{vignac2023midimixedgraph3d} generates both molecular graphs and 3D atomic arrangements, while JODO \cite{huang2023learningjoint2d} produces complete molecules with atom types, charges, bonds, and 3D coordinates. EQGAT-Diff \cite{Le2023NavigatingTD} extends joint modeling to include hybridization states, improving sample validity. FlowMol \cite{dunn2024mixed} represents categorical variables in continuous space using scaled conditional flow matching. Theoretical advances include discrete state space flows via Continuous Time Markov Chains \cite{campbell2024generative, Gat2024DiscreteFM} and Fisher Flow Matching \cite{davis2024fisherflowmatchinggenerative}, which develops Riemannian flows on probability simplexes for joint modeling.

\section{Conclusion}

This paper builds on the Variational Flow Matching framework to developed methods for controlled generation. To do so, we define controlled velocity fields in terms of their corresponding controlled probability density paths, along with variational methods for approximating these paths, offering a principled approach to generating data under specific constraints. We demonstrate how conditional generation can be formulated as a Bayesian inverse problem, bridging generative modeling with classical probabilistic reasoning. Furthermore, we propose a fully equivariant formulation that ensures generative processes respect key symmetries such as rotations, translations, and permutations, which are particularly relevant for molecular and material design. These advances achieve state-of-the-art results on benchmark datasets and provide a unified and efficient framework for discrete, continuous, and joint molecular generation tasks.

\paragraph{Future directions.} The connection between VFM and variational inference opens up significant opportunities for hybrid generative modeling. By leveraging pretrained, unconditional generative models as priors and applying inference-based techniques like MCMC or energy-based approaches for task-specific fine-tuning, we can create adaptable frameworks for diverse applications. These hybrid strategies combine the efficiency of pretrained models with the ability to incorporate detailed task-specific constraints, enabling precise control over generated outputs in complex, high-dimensional settings.

While our focus has been on molecular design, these principles extend naturally to periodic materials like crystal structures, which can be represented by atomic species, fractional coordinates, and lattice vectors. Material properties exhibit key symmetries -- invariance to permutations, translations, rotations, and periodic cell choices -- while forces and stress tensors respect these symmetries through invariance or equivariance. By leveraging equivariant formulations, VFM enables generative models to capture intricate symmetries and heterogeneities, offering a robust foundation for materials design and discovery. 

Looking forward, extending VFM to new structured domains and integrating it with efficient MCMC-like methods could drive breakthroughs across scientific and engineering fields requiring both precision and flexibility. This expansion could be particularly impactful in domains like protein design, where complex structural constraints must be satisfied while maintaining biological feasibility.  These advances could solidify VFM's role as a cornerstone of generative modeling for complex physical systems.

\paragraph{Acknowledgments} This project was support by the Bosch Center for Artificial Intelligence. JWvdM additionally acknowledges support from the European Union Horizon Framework
Programme (Grant agreement ID: 101120237).

\section*{Impact Statement}
This paper presents work whose goal is to advance the field of Machine Learning. There are many potential societal consequences of our work, none which we feel must be specifically highlighted here.

\bibliography{bibliography}
\bibliographystyle{icml2025}

\newpage
\appendix
\onecolumn

\section{Proofs}

\subsection{Controlled Variational Flow Matching}
\label{appendix:controlledvfm}

\controlledvelocity*

\begin{proof}
By the continuity equation, we know that $u_t(x \mid y) = \mathbb{E}_{p_t(x_1 \mid x, y)} \left[u_t(x \mid x_1) \right]$ generates $p_t(x \mid y)$ if
\begin{equation}
    \frac{\mathrm{d}}{\mathrm{d}t} p_t(x \mid y) = - \text{div}\left(p_t(x \mid y) \cdot \mathbb{E}_{p_t(x_1 \mid x, y)} \left[u_t(x \mid x_1) \right] \right).
\end{equation}
First, we realized the time derivative of the controlled probability path indeed can be written as
\begin{align}
    \frac{\mathrm{d}}{\mathrm{d}t} p_t(x \mid  y) &= \int \left( \frac{\mathrm{d}}{\mathrm{d}t} p_t(x \mid x_1) \right)q(x_1  \mid y) ~\mathrm{d}x_1 = \int \left( \frac{\mathrm{d}}{\mathrm{d}t} p_t(x \mid x_1) \right)q(x_1  \mid y) \\ &=- \int \mathrm{div}\left(u_t(x \mid x_1)p_t(x \mid x_1)\right) q(x_1 \mid y) ~\mathrm{d}x_1.\end{align}
Then, by Fubini's thoerem and direct manipulation, we observe that indeed
\begin{align*}
    - \int \mathrm{div}\left(u_t(x \mid x_1)p_t(x \mid x_1)\right) q(x_1 \mid y) ~\mathrm{d}x_1&=- \mathrm{div}\left(\int u_t(x \mid x_1)p_t(x \mid x_1) q(x_1 \mid y) ~\mathrm{d}x_1\right) \\
    &=- \mathrm{div}\left(p_t(x \mid y)\int u_t(x \mid x_1)\frac{p_t(x \mid x_1) q(x_1 \mid y)}{p_t(x \mid y)}  ~\mathrm{d}x_1\right) \\
    &=- \mathrm{div}\left(p_t(x \mid y)\mathbb{E}_{p_t(x_1 \mid x,y)} [u_t(x \mid x_1)]\right),
\end{align*}
and thus, that $u_t(x \mid y) = \mathbb{E}_{p_t(x_1 \mid x, y)} \left[u_t(x \mid x_1) \right]$ generates $p_t(x \mid y)$, which is what we wanted to show.\end{proof}

\subsection{Equivariance}
\label{appendix:equivariance}

\equivariance*

\begin{proof}
    Clearly, if our parameter model $\theta$ is equivariant and $u_t(x \mid x_1)$ is bi-equivariant, we have that 
    \begin{equation}
        v_t^{\theta}(g \cdot x) = u_t(g \cdot x \mid \mu_t^{\theta}(g \cdot x))  = u_t(g \cdot x \mid g \cdot \mu_t^{\theta}(x)) = g \cdot u_t(x \mid \mu_t^{\theta}(x)) = g \cdot v_t^{\theta}(x).
    \end{equation}
    Per definition, a point $x$ from our generated distribution $q_{t'}^{\theta}(x)$ is obtain through solving an ODE, i.e.
    \begin{equation}
        x = x_0 + \int_{t=0}^{t = t'} \mathbb{E}_{q_t^{\theta}(x_1 \mid x_t)} \left[u_t(x_t \mid x_1) \right] ~\mathrm{d}t,
    \end{equation}
    for some $x_0$ in $p_0$. Hence, we can define a bijection (under Lipschitz continuity of $v_t^{\theta})$ $\xi_{t'}: x_0 \mapsto x$, such that
    \begin{equation}
        x = \xi_{t'}(x_0) := x_0 + \int_{t=0}^{t = t'} \mathbb{E}_{q_t^{\theta}(x_1 \mid x_t)} \left[u_t(x_t \mid x_1) \right] ~\mathrm{d}t.
    \end{equation}
    Note that if $v_t^{\theta}$ is equivariant, we have that $\xi_{t'}(g \cdot x_0) = g \cdot \xi_{t'}(x_0)$ directly. Since $p_0(x_0) = p_0(g \cdot x_0)$ per assumption, we have that $q_{t'}^{\theta}(x) = q_{t'}^{\theta}(g \cdot x)$, which is what we wanted to show.
\end{proof}

\end{document}